\documentclass{amsart}
\usepackage{amsthm}
\usepackage[dvips]{graphicx}
% Allow more space on a page, by correcting the
% llncs settings.

%\setlength{\textwidth}{14.2cm}

\setlength{\textheight}{21.3cm}

\newtheorem{lemma}{Lemma}

\newcommand{\rg}[1]{\mbox{\bf #1}}

\newcommand{\id}[1]{\mathcal{#1}}

\newcommand{\rf}[1]{(\ref{#1})}

\newcommand{\F}{\mathbb{F}}

\begin{document}
\title[Attacking Fuzzy Vault]{The Fuzzy Vault for Fingerprints is Vulnerable
  to Brute Force Attack} 
\author{Preda Mih\u{a}ilescu}
\address[P. Mih\u{a}ilescu]{University of G\"ottingen}
\email[P. Mih\u{a}ilescu]{preda@uni-math.gwdg.de}
\thanks{The research was completed while the author is holding a
research chair sponsored by the Volkswagen Stiftung} 
\date{Version 1.0, \today}

%\author{Krzysztof Mieloch}
%\address[K. Mieloch]{University of G\"ottingen}
%\email[K. Mieloch]{mieloch@math.uni-goettingen.de}

%\author{Axel Munk}
%\address[A. Munk]{University of G\"ottingen}
%\email[A. Munk]{munk@math.uni-goettingen.de}

\vspace{2.0cm}
%\huge
\begin{abstract}
  The \textit{fuzzy vault} approach is one of the best studied and well
  accepted ideas for binding cryptographic security into biometric
  authentication. We present in this paper a brute force attack which improves
  upon the one described by Clancy et. al. \cite{CKL} in an implementation of
  the vault for fingerprints. On base of this attack, we show that three
  implementations of the fingerprint vault are vulnerable and show that the
  vulnerability cannot be avoided by mere parameter selection in the actual
  frame of the procedure.

We also give several suggestions which can improve the fingerprint
vault to a cryptographically secure algorithm. In particular, we
introduce the idea of \textit{fuzzy vault with quiz} which draws upon
information resources unused by the current version of the
vault. This is work in progress, bringing important security
improvements and which can be adapted to the other biometric
applications of the vault.
\end{abstract}
\maketitle
\section{Introduction}
Secure communication relays on trustable authentication. The most wide
spread authentication methods still use passwords and pass-phrases as
a first step towards identity proving. Secure pass-phrases are hard to
remember, and the modern user needs a large amount of dynamic
passwords for her security. This limitation has been known for a long
time and it can in part be compensated by the use of chip cards as
universal access tokens.

Biometrical identification, on the other hand, is based on the physical
identity of a person, rather than on the control of a token. Reliable
biometric authentication would thus put an end to password insecurity, various
repudiation disputes and many more short comings of phrase or token based
identities. Unlike the deterministic keys which are common for cryptography,
the biometric data are only reproducible within vaguely controlled error
bounds, they are prone to various physical distortions and have quite low
entropy.

Overcoming the disadvantages of the two worlds by using their reciprocal
advantages is an important concern. Unsurprisingly, we look back to almost a
decade in which the biometrics community developed an increasing interest in
the security and privacy of biometrical systems. We refer to the survey
\cite{UPPJ} of Uludag et. al. on biometric cryptosystems for an update
overview of the work in this area.

Researchers from cryptography and coding theory, attempted to develop new
concepts allowing to model and evaluate biometric data from an information
theoretical point of view. The resulting algorithms deal with the specific
restraints of biometrics: non - uniformly distributed data with incomplete
reproducibility and low, hard to estimate, entropy. Both communities are
motivated by the wish to handle biometrics like a classical password, thus
protecting it by some variant of one-time functions and performing the
verification in the image space. Unlike passwords, the biometrics are not
deterministic. This generates substantial challenges for the verification
after one - time function transforms. Furthermore, disclosure of biometric
templates is considered to be a more vital loss then the loss of a password.

Juels and Wattenberg \cite{JW} and then Juels and Sudan \cite{JS} have
developed with the \textit{fuzzy commitments } and \textit{fuzzy vault} two
related approaches with a strong impact in biometric security. The papers of
Dodis et. al.  \cite{DORS, BDKOS} can be consulted for further theoretic
development of the concepts of Juels et. al.  and their formalization in an
information theoretic frame. It is inherent to the problem, that core concepts
of the theory, such as the entropy of a biometric template, are hard to even
estimate. Thus the security proofs provided by the theory do not translate
directly in practical estimates or indications. As a consequence, we shall see
that concrete implementation studies of the fuzzy vault have to be considered
for security investigations, whilst the original paper \cite{JS} does not
contain sufficient directions of application in order to allow such an
analysis. Since we focus on the application of the vault to the biometry of
fingerprints, we shall often use the term of \textit{fingerprint vault} for
this case.

Clancy, Kiyavash and Lin gave in 2003 \cite{CKL} a statistically supported
analysis for a realistic implementation of the vault to fingerprints. The
authors observe from the start that the possible parameter choices in this
context are quite narrow for allowing sufficient security; they succeed to
define a set of parameters which they claim to provide the cryptographically
acceptable security of $O(2^{69})$ operations for an attack. We show that
faster attacks are possible in the given frame, thus making brute force
possible. The analysis in \cite{CKL} are outstanding and have been used
directly or indirectly in subsequent papers; the good security was obtained at
the price of a quite high error probability ($20\% - 30\%$).

Uludag and Jain provided in \cite{UJ1, UJ2} an implementation of the fuzzy
vault for fingerprints which uses alignment help - data and was applied to the
fingerprints from the database \cite{FVC}. This improves the identification
rate; however some of the simplifications they make with respect to \cite{CKL}
reduce security quite dramatically. The ideas of these authors could however
very well be combined with the more defensive security approach of Clancy et.
al.

Yang and Verbauwhede \cite{YV} describe an implementation of the vault, with
no alignment help, which follows closely the concepts of \cite{CKL} and
focuses upon adapting to various template qualities and the number of
minutiae recognized in these templates.

We describe the original fuzzy vault in Section 2 and argue that the security
proofs and remarks given in \cite{JS} are insufficient for fingerprint
applications. In the same section we describe and prove our approach to brute
force attack. In Section 3 we discuss the various implementations mentioned
above and show that brute force can be performed in feasible time in all
instances. In Section 4 we discuss possible variants and alternatives and
suggest the use of additional sources of information, thus raising the
security vault to acceptable cryptographic standards.
\section{The Fuzzy Vault}
The \textit{fuzzy vault} is an algorithm for hiding a secret string $S$ in
such a way that a user who is in possession of some additional information $T$
can easily recover $S$, while an intruder should face computationally
infeasible problems in order to achieve this goal. The information $T$ can be
fuzzy, in the sense that the secret $S$ is locked by some related, but not
identical data $T'$. Juels and Sudan define the vault in general terms,
allowing multiple applications. Biometry is one of them and we shall restrict
our description directly to the setting of fingerprints. Generalizations are
obvious, or can be found in \cite{JS, DORS, BDKOS}.

The string is prepared for the transmission in the vault as follows. Let $S
\in \{0, 1\}^{*}$ be a secret string of $l$ bits length. The user (Alice, say)
that wishes to be identified by the string $S$ has her finger scanned and a
\textit{locking set} $\id{L}$ comprising the Cartesian coordinates of $t$
minutiae in the finger scan is selected from this finger template $T'$; the
couples of coordinates are concatenated to single numbers $X_i = (x_i || y_i)
\in \id{L}$. One selects a finite field $\F_q$ attached to the vault and lets
$k' + 1 = \lceil \frac{l}{\log_2(q)}\rceil$ be the number of elements in
$\F_q$ necessary to encode $S$. One assumes that $0 < \max_{X \in \id{L}} X <
q$ and maps $X \hookrightarrow \F_q$ by some convention. Selecting $f(X) \in
\F_q[ X ]$ to be a polynomial of degree $ k - 1 > k'$ with coefficients
which encode $S$ in some predetermined way, one builds the \textit{genuine}
set:
\[ \id{G} = \id{G}(\F_q, S, t, k, \id{L}) = \left\{( X_i, Y_i) \ : \ X_i \in 
\id{L}; Y_i = f(X_i) \right\} , \] which encodes the information of $S$. The
genuine verifier Bob has an original template $T$ of Alice's finger and should
use this information in order to recover $f(X)$ and then $S$. In order to make
an intruder's (Victor, say) attempt to recover $S$ computationally hard, the
genuine set is mixed with a large set of \textit{chaff} points
\[ \id{C} = \{ \ (U_j, W_j) \ : \ j = 1, 2, \ldots, r - t \} , \]
with $U_j \not \in \id{L}$ and $W_j \neq f(U_j)$; the chaff points should be
random uniformly distributed. Chaff points and genuine lists are shuffled to a
common vault with parameters:
\[ \id{V} = \id{V}(k, t, r, \F_q) = \id{G} \cup \id{C} .\]

Upon reception, Bob will generate an \textit{unlocking} set $\id{U}$. This set
contains those $X_i$ coordinates of vault points, which well approximate
coordinates of minutiae in $T$. Note that in order to have a reasonable
approximation of minutiae coordinates of the same finger in different
templates, this templates must 
\begin{itemize}
\item[a)] Have negligeable non linear distortions. 
\item[b)] Be aligned modulo affine transforms.
\end{itemize}
The second condition is addressed in \cite{UJ2}. The unlocking set may be
erronated, thus either allowing some chaff points which are closer to $T'$
then locking points, or making the choice of a sufficiently large unlocking
set hard. Both problems can be addressed within given limits by error
correcting codes. Thus Juels and Sudan suggest using Reed Solomon codes for
decoding $f(X)$.

The security argumentation in \cite{JS} is based upon the expectation that the
chaff points will build an important amount of subsets of $t$ elements, whose
coordinates are interpolated by polynomials $g(X) \in \F_q[ X ]$ of degree
$k$, thus hiding the correct value of $f(X)$ from Victor among the set of
polynomials $g(X)$. The argument is backed up by the following Lemma, a proof
of which can be found in \cite{JS, CKL}. 
\begin{lemma}
\label{l1}
For every $\mu, \ 0 < \mu < 1$ and every vault $\id{V}(k, t, r, \F_q)$, there
are at least $\frac{\mu}{3} \cdot q^{k-t} \cdot (r/t)^t$ random polynomials
$g(X) \in \F_q[ X ]$ such that $\id{V}$ contains $t$ couples $(U_j, g(U_j))$.
\end{lemma}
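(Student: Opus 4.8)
The plan is to bound $N$, the number of polynomials $g\in\F_q[X]$ with $\deg g<k$ (i.e. degree at most $k-1$, as has the genuine $f$) that agree with $\id{V}$ on at least $t$ of its $r$ points, by a first--moment computation over the uniformly random chaff, and then to upgrade the resulting estimate on $E[N]$ to one that holds for a $\mu$--fraction of the chaff configurations; the factor $1/3$ will absorb a few harmless constants. Throughout I use only that the $r$ abscissae of $\id{V}$ are pairwise distinct, that $f$ runs through exactly the $t$ genuine points, and that two distinct polynomials of degree $<k$ coincide on at most $k-1$ abscissae, so any competitor $g\ne f$ hits at most $k-1$ of the genuine points.

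First I would sort the competitors $g\ne f$ by the number $j$ of genuine points they hit, $0\le j\le k-1$. A short inclusion--exclusion over the $\binom{t}{j}$ choices of which genuine abscissae are matched gives that the number $G_j$ of degree-$<k$ polynomials hitting exactly $j$ genuine points equals $\binom{t}{j}q^{k-j}$ up to lower--order corrections of relative size $O(t/q)$ (the over-determined, $>k$-point subsets lie only on $f$ and are easily discounted). For such a $g$, agreeing with $\id{V}$ on $\ge t$ points amounts to matching $\ge t-j$ of the $r-t$ chaff ordinates, and since those ordinates are essentially uniform and independent this has probability at least $c\binom{r-t}{t-j}q^{-(t-j)}$ with $c$ the constant coming from the $(1-1/q)^{r-t}$ factor. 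Multiplying, the $j$-th class contributes about $c\,\binom{t}{j}\binom{r-t}{t-j}\,q^{k-j}\,q^{j-t}=c\,\binom{t}{j}\binom{r-t}{t-j}\,q^{k-t}$ to $E[N]$ --- this is the step that manufactures the exponent $k-t$, the $k-j$ free coefficients of $g$ and the probability price $j-t$ of matching $t-j$ extra chaff ordinates cancelling down to $k-t$ independently of $j$.

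Summing over $0\le j\le k-1$ and invoking Vandermonde's identity $\sum_{j=0}^{t}\binom{t}{j}\binom{r-t}{t-j}=\binom{r}{t}$, together with the observation that the discarded hypergeometric tail $\sum_{j\ge k}\binom{t}{j}\binom{r-t}{t-j}$ is only a bounded fraction of $\binom{r}{t}$ (its mass sits near $j\approx t^{2}/r$), one gets $E[N]\ge\tfrac{1}{3}\,q^{k-t}\binom{r}{t}\ge\tfrac{1}{3}\,q^{k-t}(r/t)^{t}$, the last step being the elementary bound $\binom{r}{t}\ge(r/t)^{t}$ valid for $r\ge t$. A Chebyshev argument on $N$ then transfers this to $N\ge\mu\,E[N]$ for all but an $o(1)$ fraction of the chaff configurations (so in particular for at least a $\mu$ fraction), which is exactly the asserted $\tfrac{\mu}{3}\,q^{k-t}(r/t)^{t}$; in this reading ``every vault $\id{V}(k,t,r,\F_q)$'' means every choice of the parameters and the genuine data, with $\mu$ the confidence over the honest (uniform) chaff.

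I expect two steps to carry the weight. The routine but fiddly one is the inclusion--exclusion bookkeeping for $G_j$: one must keep the truncation $j\le k-1$, the alternating signs, and the over-determined subsets under control so that an \emph{honest} lower bound $G_j\ge(1-O(t/q))\binom{t}{j}q^{k-j}$ survives rather than a mere asymptotic equivalence. The genuine obstacle is the passage from $E[N]$ to a per--vault statement: two distinct degree-$<k$ polynomials are far from independent as chaff--matchers (they can share up to $k-1$ chaff hits), so controlling $\mathrm{Var}(N)$ --- i.e. the correlations between the events ``$g$ fits $\ge t$ points'' --- needs a careful estimate of these overlapping contributions. It is quite possible that \cite{JS, CKL} avoid this by reading the conclusion as a bound on the \emph{expected} number of spurious polynomials, in which case the second--moment step vanishes and $\mu$ serves only to soak up the $(1-1/q)^{r-t}$ and Vandermonde--tail losses; I would settle which reading is intended before spending effort on the variance bound.
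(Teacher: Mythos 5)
First, a point of orientation: the paper does not actually prove Lemma~\ref{l1}. It states that a proof ``can be found in \cite{JS, CKL}'' and supplies only the heuristic ingredients, namely the subset count \rf{prob} and the observation that a random pair lies on a fixed polynomial with probability $1/q$. So there is no in-paper argument to measure your proposal against, only that sketch. Your first-moment computation is a faithful (indeed more careful) elaboration of exactly that sketch: stratifying the competitors by the number $j$ of genuine points they hit, pricing each class at $\binom{t}{j}q^{k-j}\cdot\binom{r-t}{t-j}q^{-(t-j)}=\binom{t}{j}\binom{r-t}{t-j}q^{k-t}$, and recombining with Vandermonde correctly produces the main term $q^{k-t}\binom{r}{t}\ge q^{k-t}(r/t)^t$. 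The same number is reached more directly, and closer in spirit to \cite{JS}, by counting matching $t$-subsets: there are $\binom{r}{t}\ge (r/t)^t$ choices of $t$ abscissae, each of whose ordinate vectors lies on a common polynomial of degree $<k$ with probability about $q^{k}/q^{t}=q^{k-t}$, after which one passes from subsets to polynomials by noting that matching $(t+1)$-subsets are rarer by a factor of order $r/q$. Your route is equivalent in substance; it makes the role of the genuine points explicit at the price of the inclusion--exclusion bookkeeping and the hypergeometric-tail condition $k\gtrsim t^2/r$ (satisfied for all parameter sets discussed here, but worth stating as a hypothesis).

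The issue you flag at the end is the genuine one, and your suspicion about the intended reading is correct. In \cite{JS} the lemma is probabilistic: with probability at least $1-\mu$ over the random choice of the chaff, the vault contains at least $\frac{\mu}{3}q^{k-t}(r/t)^t$ such polynomials. The transcription here, with ``for every vault'' and the word ``random'' displaced onto the polynomials, has lost that quantifier; read literally, the claim cannot follow from any averaging argument and fails for adversarially chosen chaff once the bound exceeds $1$ (apart from $f$ itself). So your proposal as written is not a complete proof---the passage from $E[N]$ to a statement about a given vault is left as ``a Chebyshev argument'' without the variance estimate you yourself identify as the hard part---but the gap coincides with an imprecision in the statement rather than with a missing idea on your side. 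Two closing remarks: the appearance of $\mu$ \emph{inside} the bound, degrading as the confidence grows, is the signature of a crude averaging step in \cite{JS} rather than of sharp concentration, so you should not expect to need a strong second-moment bound to recover their formulation; and for the fingerprint parameters of Section~3 the quantity $\frac{\mu}{3}q^{k-t}(r/t)^t$ is far below $1$, so the lemma is vacuous in that regime in any case---which is consistent with the security conclusions the paper draws.
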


\subsection{A brute force attack}
If Victor intercepts a vault $\id{V} = \id{V}(k, t, r, \F_q)$, but has no
additional information about the location of minutiae or some of their
statistics, he may still try to recover $S$ by brute force trials. For this he
needs to find $k$ points in the genuine list $\id{G}$. The chances that $k$
points of the vault are also in the genuine list are:

\begin{eqnarray}
\label{prob}
 1/ \rg{P} = \frac{\binom{r}{k}}{\binom{t}{k}} \sim (r/t)^k < 1.1 \cdot
 (r/t)^k, \quad \hbox{for } \quad r > t > 5.
\end{eqnarray}
This, together with the fact that the probability that a random pair $(X, Y)
\in \F^2_q$ lays on the graph of a given polynomial $f(X) \in \F_q[ X ]$ is
equal to $P[ Y = f(X) ] = 1/q$, yield the ground for the proof of Lemma
\ref{l1}. 

Lagrange interpolation of a polynomial of degree $k$ can be done in $O(k
\log^2(k))$ operations \cite{GG}; checking whether an additional point $(U,W)$
lays on the graph of $f(X)$ (so $W = f(U)$) requires $O(k)$ steps, so $K =
O(\log^2(k))$ such verifications can be done at the cost of one interpolation.

We assume now with Clancy et. al., that there is a degree $k -1 < D < t$ which
is minimal with the property that among all polynomials $g(X) \in \F_q[ X ]$
of degree $k-1$ which interpolate vault points, $f(X)$ is the only one which
interpolates with probability close to $1$ at least $D$ points. This yields a
criterium for identifying $f(X)$ \cite{CKL}:
\begin{lemma}
The complexity of the brute force attack problem using a suitable value $D$ as
above is $C_{bf} = \binom{r}{D} / \binom{t}{D}$.
\end{lemma}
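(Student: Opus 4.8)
The plan is to read the word ``complexity'' as the expected number of interpolate-and-test rounds Victor performs, each round costing only $O(k\log^2(k))$ field operations by the Lagrange bound quoted above (plus a further $O(D\cdot k)$ operations for the test, still polynomial in the vault size), so that $C_{bf}$ is the expected number of rounds, the per-round polynomial factor being suppressed. The attack itself is the natural one: pick a $D$-element subset $\id{S}\subset\id{V}$ uniformly at random, interpolate any $k$ of its points to a polynomial $g\in\F_q[X]$ of degree $k-1$, and test whether the remaining $D-k$ points of $\id{S}$ lie on the graph of $g$; if so, output the secret encoded by $g$, otherwise draw a fresh $\id{S}$. For correctness, a successful test exhibits a degree $k-1$ polynomial $g$ meeting at least $D$ points of $\id{V}$, so by the minimality hypothesis on $D$ one has $g=f$ with probability close to $1$, whence Victor recovers $S$; this is the step where the ``probability close to $1$'' hedge must be carried along, since the output is guaranteed only outside the hypothesised exceptional event that some spurious degree $k-1$ polynomial also meets $D$ vault points.

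Next I would count the subsets that pass the test. A $D$-subset passes exactly when all $D$ of its points lie on a single degree $k-1$ polynomial, and by the previous paragraph that polynomial is $f$; hence every point of the subset lies on the graph of $f$. Conversely the graph of $f$ meets $\id{V}$ in exactly the $t$ points of $\id{G}$, since by construction each chaff point $(U_j,W_j)$ satisfies $W_j\neq f(U_j)$. So the passing $D$-subsets are precisely the $\binom{t}{D}$ subsets of $\id{G}$, out of $\binom{r}{D}$ subsets of $\id{V}$ in all, and a single round succeeds with probability $\binom{t}{D}/\binom{r}{D}$. Taking the rounds independent, the index of the first success is geometric with this parameter, hence has expectation $\binom{r}{D}/\binom{t}{D}$, which is the asserted $C_{bf}$.

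The main obstacle is not this counting, which is immediate once correctness is in place, but rather justifying why one is forced to work with $D$ rather than with the smaller degree parameter $k$: a round built on a random $k$-subset already produces $g=f$ whenever that subset happens to be genuine, and would cost only $\binom{r}{k}/\binom{t}{k}\sim(r/t)^k$ as in \rf{prob}. The point that must be made explicit is that such a round carries no \emph{certificate}: by Lemma~\ref{l1} there are about $q^{k-t}(r/t)^t$ spurious polynomials of degree $k$ running through $t$-subsets of the vault, so a bare $k$-point fit cannot be recognised as the genuine one. It is exactly the defining property of $D$ that upgrades ``$D$ vault points lie on a degree $k-1$ curve'' into such a certificate, which is why the honest cost of a \emph{conclusive} brute force attack is $\binom{r}{D}/\binom{t}{D}$ and not less.
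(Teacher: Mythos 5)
Your proof is correct. The paper in fact gives no proof of this lemma---it is quoted from \cite{CKL} as a ``criterium for identifying $f(X)$''---but your argument (sample $D$-subsets uniformly, note that since every chaff point satisfies $W_j\neq f(U_j)$ the subsets passing the interpolate-and-test round are, up to the ``probability close to $1$'' hedge built into the definition of $D$, exactly the $\binom{t}{D}$ subsets of $\id{G}$, and take the expectation of the geometric waiting time) is the natural justification of the formula and runs exactly parallel to the computation of $1/\rg{P}$ in \rf{prob}. Your closing observation, that the certificate requirement is what forces the exponent $D$ rather than $k$, is also precisely the inefficiency that the paper's own Lemma~\ref{l2} subsequently removes by amortizing the verification of extra points against the interpolation cost.
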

We suggest the following brute force attack, which is not dependent on the
value of $D$ and is stronger then the previous:
\begin{lemma}
\label{l2}
Let $\id{V}(k,t,r,\F_q)$ be a fuzzy fingerprint vault and $k < t$ be chosen as
above. Then an intruder having intercepted $\id{V}$ can recover the secret $S$
in $R = C \cdot (r/t)^k$ operations, where $C < 8 r k$.
\end{lemma}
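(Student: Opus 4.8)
The plan is to replace the $D$-subset search of the previous lemma by an \emph{interpolate-and-test} loop whose sampling cost depends only on $k$, not on $D$. First I would spell out the attack: repeatedly draw a uniformly random $k$-element subset $\id{S}\subseteq\id{V}$, Lagrange-interpolate the unique $g_{\id{S}}\in\F_q[X]$ of degree $<k$ through $\id{S}$, and then count the vault points $(a,b)$ with $g_{\id{S}}(a)=b$; as soon as this count reaches $t$, output the $S$ encoded in the coefficients of $g_{\id{S}}$ and halt, otherwise resample. Note that neither the sampling nor the acceptance test refers to $D$ --- the test uses only $t$ and $k$ --- which is exactly the point of the improvement.

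For correctness I would argue as follows. If $\id{S}\subseteq\id{G}$, then since $f$ has degree $k-1<k$ and agrees with $g_{\id{S}}$ on the $k$ points of $\id{S}$, interpolation forces $g_{\id{S}}=f$; as $f$ passes all $t$ genuine points the test fires and the correct $S$ is returned. If $\id{S}\not\subseteq\id{G}$, then $g_{\id{S}}\neq f$ (the interpolant meets a chaff point $(U,W)$ with $W\neq f(U)$), so $g_{\id{S}}$ and $f$, both of degree $<k$, agree on at most $k-1$ points; and by the hypothesis of the previous lemma $f$ is already the unique polynomial of degree $k-1$ that, with probability close to $1$, meets at least $D$ vault points (and $D<t$), so a fortiori no $g_{\id{S}}$ distinct from $f$ meets $t$ of them. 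Running the loop to its first firing therefore returns $S$.

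For the running time, a uniformly random $k$-subset of $\id{V}$ lies in $\id{G}$ with probability $\rg{P}=\binom{t}{k}/\binom{r}{k}$, so the number of draws before the first genuine subset is geometric with mean $1/\rg{P}=\binom{r}{k}/\binom{t}{k}$, which is below $1.1\,(r/t)^k$ by \rf{prob}; monotonicity of $k\mapsto\binom{r}{k}/\binom{t}{k}$ together with $k\le D$ also gives $\binom{r}{k}/\binom{t}{k}\le\binom{r}{D}/\binom{t}{D}$, so this beats the previous attack, strictly when $k<D$. Each iteration costs one interpolation of a degree-$(k-1)$ polynomial, $O(k\log^2 k)$ by \cite{GG}, plus at most $r$ Horner evaluations at $O(k)$ each, i.e.\ $O(rk)$; and a wrong $g_{\id{S}}$ meets a disagreeing test point after $O(1)$ evaluations on average, so only the one successful iteration pays the full $O(rk)$. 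Multiplying the per-iteration cost by the number of iterations and tracking constants through \rf{prob} gives $R=C\cdot(r/t)^k$ with $C<8rk$.

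The delicate point is the reliability of the stopping rule under exponentially many trials, not its cost: one must ensure the test fires on no $g_{\id{S}}$ distinct from $f$ among the $\approx(r/t)^k$ interpolants produced along the way. This is where the threshold $t$ (safely above both $k-1$ and $D$) is used --- a polynomial of degree $<k$ other than $f$ meets at most $k-1$ vault points on which $f$ is met and, since the chaff values $W_j$ are uniform, meets each further vault point only with probability $1/q$, so the chance it reaches $t$ is a tail of $\mathrm{Binomial}(r,1/q)$ below $t-k$, exponentially smaller than $\rg{P}$ for the field sizes used in the vault. I expect pushing this union bound cleanly over all iterations --- equivalently, checking $\binom{r-k}{t-k}q^{-(t-k)}\ll\rg{P}$ in the deployment regime in which $D$ is defined --- to be the main technical work, and no hypothesis beyond ``$k<t$ chosen as above'' is needed.
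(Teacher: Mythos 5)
Your attack is essentially the paper's own: enumerate/sample $k$-subsets of the vault (expected $\binom{r}{k}/\binom{t}{k}<1.1\,(r/t)^k$ draws to land in $\id{G}$), Lagrange-interpolate, and test the candidate polynomial against the remaining $r$ vault points, with the acceptance threshold ($t$ in your version, $D$ in the paper's step 4) resting on the same uniqueness hypothesis about $D$. The one slip is the remark that a wrong $g_{\id{S}}$ is rejected after $O(1)$ evaluations on average: under your count-to-$t$ rule, certifying that fewer than $t$ points agree requires finding $r-t+1$ disagreements, hence $\Omega(r)$ evaluations per rejected iteration --- but this is harmless, since the stated bound $C<8rk$ is exactly what one obtains (as the paper does) by charging the full $O(rk)$-cost scan of all vault points to every one of the $\approx 1.1\,(r/t)^k$ iterations.
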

\begin{proof}
We have shown that in less then $< 1.1 \cdot (r/t)^k$ trials, Victor can find
a set of $k$ points from the locking set $\id{L}$. In order to find such a set
and then $S$, for each $k$ - tuple $\id{T} = (X_i, Y_i)_{i=1}^k \subset
\id{V}$ Victor has to
\begin{itemize}
\item[1.] Compute the interpolating polynomial $g_{\id{T}}(X)$. It is proved
  in \cite{GG} that the implicit constant for Lagrange interpolation is
  $6.5$; let $K = 6.5 \cdot \log^2(k)$. Thus computing all the interpolation
  polynomials requires $< 7.2 \cdot k \log^2(k) \cdot (r/t)^k$ operations.
\item[2.] Search a point 
\begin{eqnarray}
\label{cond}
(U, W) \in \id{V} \setminus \id{T} \quad \hbox{ such that } \quad g(U) = W. 
\end{eqnarray}
  This requires the equivalent of $r/K$ Lagrange interpolations. If no point
  is found, then discard $\id{T}$.
\item[3.] If $\id{T}$ was not discarded, search for a further point which
  verifies \rf{cond}. This step is met with probability $1/q$. If a point is
  found, add it to $\id{T}$; otherwise discard $\id{T}$. 
\item[4.] Proceed until a break condition is encountered (no more points on
  the graph of $g(X)$) or $D$ points have been found in $\id{T}$, and thus
  $g(X) = f(X)$ with high probability.
\end{itemize}
Adding up the numbers of operations required by the steps 1.-4., with weights
given by the probabilities of occurrence, one finds:
\[ R < 7.2 \cdot (r/t)^k \cdot k \cdot K \cdot r \cdot \sum_{j=0}^{\infty} (1/q)^ j = 7.2 \cdot (r/t)^k \cdot k \cdot K \frac{r q}{K
  (q-1)} < 8.0 \cdot (r k) \cdot (r/t)^k, \] as claimed. 
\end{proof}

Here are some remarks on factors that influence the complexity of the brute
force attack:
\begin{itemize}
\item[(i)] \textit{Restricting the region of interest} from which Victor
  chooses points for his unlocking set is irrelevant, if minutiae are assumed
  to be uniformly distributed over the template. In this case, $r$ and $t$ are
  scaled by the same factor and thus $r/t$ and the complexity of brute force
  remains unchanged.
\item[(ii)] The complexity grows when \textit{increasing the degree $k$ of the
    polynomial $f(X)$}. However, high degrees $k$ require large unlocking
  sets, which may is a problem for average quality fingerprints and scanners.
  Thus one can only augment the degree to an extent which depends on the
  quality of both scanner and fingerprint. The issue of adapting to these
  factors is addressed in \cite{YV}.
\item[(iii)] The complexity grows when \textit{increasing the number of chaff
    points}. There is a bound to this number, given by the size of the image
  on the one side and the variance in the minutiae location between various
  data capturings and extractions \cite{CKL}, on the other. Clancy and his
  coauthors find empirically the lower bound $d \geq 10$ for the distance
  between chaff points, and this distance was essentially respected also by
  the subsequent works.
\item[(iv)] The complexity grows when \textit{reducing the size $t$ of the
    genuine list}. This is however also detrimental for unlocking, since it
  may reduce the size of the unlocking set below the required minimum.
\end{itemize}

What can be inferred about the security of fingerprint vaults from the seminal
paper \cite{JS}? First, one observes that Juels and Sudan suggest the use of
error correcting codes, thus avoiding to transmit in the vault explicite
indications to whether an interpolation polynomial is the correct $f(X)$.
Uludag and Jain suggest on the other hand in \cite{UJ2} the use of CRC codes:
thus $S$ is padded by a CRC code, adding $1$ to the minimal degree $k'$ needed
to encode $S$. Upon decoding, Bob can check the CRC and ascertain that he
found the correct secret. This simplifies the unlocking procedure, but also
allows the attacker to verify if he has found a correct unlocking set. Does
this bring advantages to Victor? The odds to find a correct CRC are equal to
the probability that $k+1$ points are interpolated by the same polynomial of
degree $k-1$. Thus, if the degree $k > k'$, Victor has no gain, as follows
form the Lemma \ref{l2}.

It is shown in the Chapters 4 and 5 of \cite{JS} that the amount of chaff
points is essential for security. The suggested minimum lays at about $r \sim
10^4$. For fingerprints, a large amount of chaff points naturally decreases
the average distance between these points in the list. The value $r = 10^4$
leads to an average distance of $2-5$ pixels between the point coordinates,
depending on the resolution of the original image. This is below realistic
limits as mentioned in (iii) above. At this distance, even in presence of a
perfect alignment, the genuine verifier Bob should need some additional
information - like CRC or other - providing the confirmation of the correct
secret. But such a confirmation is contrary to the security lines on base of
which Juels and Sudan make there evaluation. 

There is an apparent conflict between the general security proofs in \cite{JS}
and realistic applications of the fuzzy vault to fingerprints. In the
implementation chapter, the authors explicitely warn that \textit{applications
  involving privacy-protected matching} cannot achieve sufficient security. It
is conceivable that fingerprint matching would be considered by the authors as
belonging to this category.

\section{Implementations of the fingerprint vault}
We start with the most in depth analysis of security parameters for the
fingerprint vault, which was done by Clancy and coauthors in \cite{CKL}. The
paper focuses on applications to key release on smart cards. They suggest
using multiple scans in order to obtain, by correlation, more reliable locking
sets. As mentioned above, the \textit{variance of minutiae locations} which
they observed in the process leads to defining a minimum distance between
chaff (and genuine) points, which is necessary for correct unlocking. This
minimal distance $d \sim 11$ implies an upper bound for the size $r$ of the
vault and thus the number of chaff points!

The authors use very interesting arguments on packing densities and argue that
in order to preserve the randomness of chaff points, these cannot have maximal
packing density. On the other hand, assuming that the intruder has access to a
sequence of vaults associated to the same fingerprint and he can align the
data of these vaults, the randomness of chaff points allows a correlation
attack for finding the genuine minutiae. 

This observation suggests rather using perfectly regular high density chaff
point packing. These are hexagonal grids with mutual distance $d$ between the
points. The genuine minutiae can be rounded to grid points, and Victor will
have no clue for distinguishing these from the chaff points. We shall comment
below on this topic.

The implementation documented in this paper suggests the following parameters
for optimal security: $k = 14, D = 17, t = 38, r = 313$. The brute force
attack in Lemma \ref{l2} is more efficient then the one of Theorem 1 of
\cite{CKL}, on which they base their security estimates. Using the above
parameters and Lemma \ref{l2}, we find an attack complexity of $\sim 2^{50}$.
Comparing this to the complexity of genuine unlocking yields a security factor
$F \sim 2^{44}$, which is below cryptographic security, unlike the $2^{69}$
deduced by Clancy et. al. in \cite{CKL}. Since the empirical values of $t$ in
\cite{CKL} range in the interval $[20, 60]$ with expected value $t = 38$, it
is possible that their estimate was gained by using the minimal value of $t$
which corresponds to maximal security. However, this leads to a small
difference between $t$ and $k$ and this may reduce the rate of correct
decodings.

By the balanced arguments used in the parameter choice, the security bounds
obtained on base of \cite{CKL} are an indication of the vulnerability of the
fingerprint vault in general.

Yang and Verbauwhede describe in \cite{YV} an implementation of the vault, in
which the degree of the polynomials $f(X)$ varies in dependence of the size
$t$ of the genuine list, which itself depends directly on template quality.
From the point of view of security, the paper can be considered as a follow up
of \cite{CKL}, which addresses the problem of poor image quality with its
consequences for the size of the locking set. The size of the secret $S$ and
polynomial degrees are adapted to the size of locking sets. The proposal is
consistent, its vulnerability to attacks is comparable to \cite{CKL} in
general, and higher, when adapting to poor image quality.

The major contributions of Uludag and Jain in \cite{UJ2} is to provide a
useful set of \textit{helper data} for easing image alignment. This has an
important impact on the identification rate. As mentioned above, they bring
the elegant and simple proposal of adding a CRC to the secret, thus easing the
unlocking work. We discussed above the issue of the security risk increasment:
this is arguably small. On the other hand, the degree of $k = 8$ for the
polynomial $f(X)$ and vault size $r = 200$, whilst $t = 25$, makes their
system more vulnerable, with an absolute attack complexity of $\sim 2^{36}$.
Better security can be achieved in this system by using the parameters of
\cite{CKL}.

\section{Security discussion}
We discuss in this section several variants for improving security of the
fingerprint vault. Future research shall analyze the practicality of some of
them.
\subsection{Using more fingers}
We have shown that the parameters $r, t, k$, allowing to control the security
factor, are naturally bounded by image size, variance of minutiae location and
average number of reliable minutiae. They cannot thus be modified beyond
certain bounds and it is likely that this bounds have been well established in
\cite{CKL}. It lays thus at hand to propose using for instance the imprints of
two fingers rather then only one, for creating the vault. This leads
practically to a squaring of the security factor.

\subsection{Non - random chaff points}
As mentioned above, it is suggested in \cite{CKL} that chaff points should
have random distribution; this leads to halving the packing density compared
to maximal density packing. However, one can embrace the opposite attitude.
This consists in laying a hexagonal grid of size $d = 11$, proposed by the
authors upon the fingerprint template, thus achieving maximal packing. Each
grid point will be attached to some vault point - chaff or genuine. Thus
Victor will have no means for distinguishing between chaff points and genuine
ones, despite of the regularity of the grid. 

Thanks to the error correcting codes, the genuine points can always be
displaced by a distance at most $d/2$ to a grid point. This strategy improves
the security of the vault in two ways: by doubling the size $r$ of the vault
and by avoiding correlation attacks. The consequences need still be analyzed.

\subsection{Quizzes using additional minutiae information}
There is more information in a minutia than its mere coordinates. Such are for
instance its orientation, the lengths and curvatures of incoming lines,
neighboring data, etc. We propose to attach to each minutia a \textit{quiz}
which can be solved in robust manner by Bob, but which introduces for Victor
several (say $b$) bits of uncertainty per minutia. Thus for polynomial degree
$k$, the security may be increased by a factor of $2^{k b}$.

Here is a simple example of how a quiz functions for the case of the
orientation of minutiae. Let $X$ be the concatenated coordinates of a fixed
minutia and let $\alpha$ be its orientation, in a granularity of $\pi/n$, for
some small integer $n$. Then, along with $(X, f(X))$, the vault will also
contain a value $\beta = \alpha + j \pi/n$: thus the minutia is represented by
$(X, Y, \beta)$. Upon reception, Bob computes the integer $0 \leq j < n$ such
that $j \pi = n \alpha - n \beta \bmod \pi$.  The value of $j$ will then
encode a certain transformation $Y' = T(Y)$ of the received value $Y$ and the
interpolating value will be set to be $Y' = f(X)$. Note that the vault creator
has control on the generation of $\beta$ and it may be chosen such that $j$
can be safely recovered by the genuine user. For chaff points, $\beta$ is
random. Several robust additional informations may as well increase the
security of the fingerprint vault to a cryptographically acceptable level.

\subsection{The alternative of cryptographic security}
These observations lead to the question: is the use of one - way functions and
template hiding an intrinsic security constraint, or just one in many
conceivable approaches to securing biometric authentication? The second is the
case, and it is perfectly feasible to construct a secure biometric
authentication system based on the mechanisms used by state of the art
certification authorities. The mechanisms are standard and have been
implemented by some providers. An important advantage of public key
cryptography is that it allows attaching time stamps to transmitted finger
templates, thus reducing the consequences in the event that a template is
compromised.

\section{Conclusions} 
It has been attempted to achieve security in biometric applications either by
using one-way functions adapted to the specificities of biometric data, or by
direct application of strong cryptographic techniques. We showed that one of
the leading methods of the first category, the fuzzy vault, allows a simple
attack to its instantiation for fingerprint data \cite{CKL, UJ1, UJ2, YV}. We
have brought some suggestions which may help raising the security level of the
fingerprint vault to cryptographic acceptable values.

One may argue that similar attacks could be possible to other related methods
and thus cryptographic security is preferable, whenever it can be achieved or
afforded. Subsequent work should consider variants of the one - way function
ideas which could meet the standards of cryptographic security. More in depth
statistical studies concerning the amount of information available from
various fingerprint related data are called for, thus providing a solid
foundation for security claims.

Also, cryptographic security can be brought in a wide scale of variants;
analyzing pros and contras of such variants is an open topic.

\textbf{Acknowledgment} I thank K. Mieloch and U. Uludag for enlighting
discussions and remarks.

\end{document}